\documentclass{article}

\usepackage{microtype}
\usepackage{graphicx}
\usepackage{subcaption}
\usepackage{booktabs} 

\usepackage{hyperref}

\usepackage[accepted]{icml2026}

\usepackage{amsmath}
\usepackage{amssymb}
\usepackage{mathtools}
\usepackage{amsthm}
\usepackage{booktabs}
\usepackage{multirow}
\usepackage{array}
\usepackage{graphicx} 

\usepackage[capitalize,noabbrev]{cleveref}

\theoremstyle{plain}
\newtheorem{theorem}{Theorem}[section]

\newtheorem{lemma}[theorem]{Lemma}

\theoremstyle{definition}

\theoremstyle{remark}

\usepackage[textsize=tiny]{todonotes}

\icmltitlerunning{FAST-AR: Fast Autoregressive Video Diffusion and World Models with
Temporal Cache Compression and Sparse Attention}

\begin{document}

\twocolumn[
  \icmltitle{FAST-AR: Fast Autoregressive Video Diffusion and World Models with Temporal Cache Compression and Sparse Attention}



  \icmlsetsymbol{equal}{*}

  \begin{icmlauthorlist}
    \icmlauthor{Dvir Samuel}{originai,biu}
    \icmlauthor{Issar Tzachor}{originai}
    \icmlauthor{Matan Levy}{huji}
    \icmlauthor{Michael Green}{originai}
    \icmlauthor{Gal Chechik}{biu,nvidia}
    \icmlauthor{Rami Ben-Ari}{originai}
  \end{icmlauthorlist}

  \icmlaffiliation{originai}{OriginAI, Tel-Aviv, Israel}
  \icmlaffiliation{biu}{Bar-Ilan University, Ramat-Gan, Israel}
  \icmlaffiliation{huji}{The Hebrew University of Jerusalem, Jerusalem, Israel}
  \icmlaffiliation{nvidia}{NVIDIA, Tel-Aviv, Israel}

  \icmlcorrespondingauthor{Dvir Samuel}{dvirsamuel@gmail.com}

  \icmlkeywords{Machine Learning, ICML}

  \vskip 0.3in
  
]



\printAffiliationsAndNotice{}  
\begin{figure*}[t!]
	\centering
	\includegraphics[width=1\linewidth]{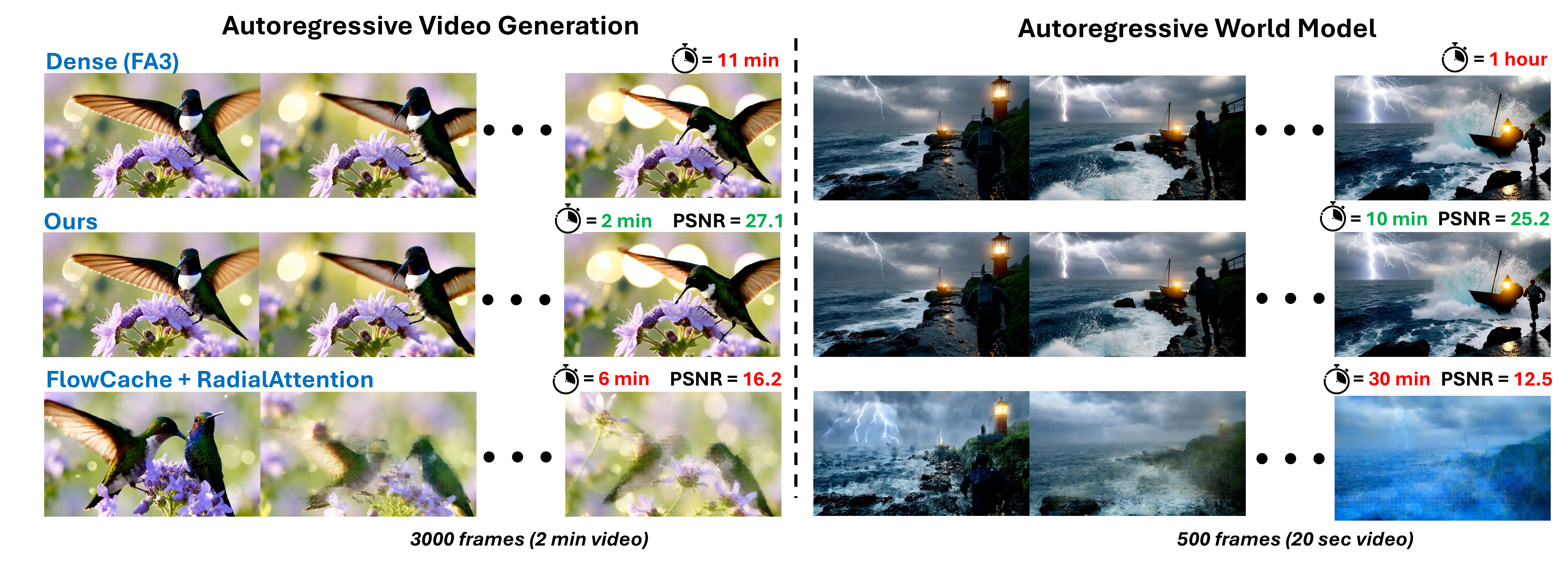}
	\caption{Our method substantially accelerates \emph{pre-trained} autoregressive video diffusion models and autoregressive world models while maintaining high visual quality, by introducing a new KV-cache compression with self- and cross-attention sparsification. On a single H100 GPU, it achieves $5\times$--$10\times$ speedups for multi-minute video generation, without further training/optimization, and keeps peak GPU memory nearly constant over long rollouts.}

\label{fig:fig1}
\end{figure*}

\begin{abstract}
Autoregressive video diffusion models enable \emph{streaming} generation, opening the door to long-form synthesis, video world models, and interactive neural game engines. However, their core attention layers become a major bottleneck at inference time: as generation progresses, the KV cache grows, causing both increasing latency and escalating GPU memory, which in turn restricts usable temporal context and harms long-range consistency. In this work, we study redundancy in autoregressive video diffusion and identify three persistent sources: near-duplicate cached keys across frames, slowly evolving (largely semantic) queries/keys that make many attention computations redundant, and cross-attention over long prompts where only a small subset of tokens matters per frame.
Building on these observations, we propose a unified, training-free attention framework (\textbf{FAST-AR}) for \textbf{FAST}-\textbf{A}uto\textbf{R}egressive diffusion, consisting of three components: \textbf{TempCache} compresses the KV cache via temporal correspondence to bound cache growth; \textbf{AnnCA} accelerates cross-attention by selecting frame-relevant prompt tokens using fast approximate nearest neighbor (ANN) matching; and \textbf{AnnSA} sparsifies self-attention by restricting each query to semantically matched keys, also using a lightweight ANN. Together, these modules reduce attention, compute, and memory and are compatible with existing autoregressive diffusion backbones and world models. Experiments demonstrate up to $\times 5$--$\times 10$ end-to-end speedups while preserving near-identical visual quality and, crucially, maintaining stable throughput and nearly constant peak GPU memory usage over long rollouts, where prior methods progressively slow down and suffer from increasing memory usage. \href{https://dvirsamuel.github.io/fast-auto-regressive-video/}{Project Page}
\end{abstract}

\section{Introduction}

Video diffusion models~\cite{HaCohen2024LTXVideoRV, Wang2025WanOA, kong2024hunyuanvideo} have achieved strong results in offline video synthesis, where all frames in a short clip are generated jointly, producing high visual fidelity and temporally coherent motion. Recently, \emph{autoregressive} video diffusion models \cite{yin2025causvid, huang2025selfforcing} have emerged to enable streaming generation: frames are produced sequentially in an online manner and can be consumed immediately. This transition from offline to online generation unlocks new applications, including long-form video generation~\cite{liu2025rolling}, controlled video world models~\cite{Agarwal2025CosmosWF,gao2025longvie2}, and neural game engines~\cite{gao2025longvie2,hunyuanworld2025tencent}.

Despite these new capabilities,  autoregressive video diffusion models expose a critical bottleneck in their attention mechanisms, leading to two fundamental challenges:
\textbf{(a) Generation speed.} 3D spatio-temporal attention scales with the number of cached keys. As the KV cache grows with each generated frame, the per-step attention cost (and thus latency) increases over time, making long or unbounded generation progressively slower.
\textbf{(b) Memory efficiency.} The expanding KV cache also creates substantial memory overhead, limiting how much temporal context can fit on GPU. This often forces short context windows \cite{liu2025rolling}, which in turn harms long-range temporal consistency.

Despite KV-cache efficiency being central to autoregressive video generation, it remains relatively underexplored: KV-cache compression techniques from NLP (for LLMs)  can not be applied directly~\cite{FlowCache}, and recent methods~\cite{TeaCache, FlowCache} offer only modest speedups, sometimes with quality degradation. 
In parallel, most progress on accelerating 3D spatio-temporal attention targets \emph{offline} video diffusion~\cite{SVG1, SVG2, li2025radial}. Yet these gains do not reliably transfer to online autoregressive models, where the same methods frequently degrade quality and deliver limited, or even negative, speedups due to unfavorable memory and cache behavior.
Following this, we therefore analyze whether autoregressive video diffusion contains exploitable redundancy, and find three consistent sources that are not directly addressed by existing sparse attention designs: (i) many cached keys are near-duplicates across frames, enabling aggressive KV merging; (ii) Q and K evolve slowly and are largely semantic, making many query--key dot products (attention score computations) redundant; and (iii) cross-attention wastes compute on long prompts although only a few tokens matter per frame.

Motivated by these observations, we introduce a unified attention framework, \textbf{FAST-AR} (\textbf{Fast} \textbf{A}uto\textbf{R}egressive diffusion), consisting of three components: \textbf{TempCache} compresses the KV cache using temporal correspondence; \textbf{AnnCA} prunes prompt tokens per frame, in cross-attention layers, using fast Approximate Nearest Neighbors (ANN) matching; and \textbf{AnnSA} sparsifies self-attention by restricting each query to semantically matched keys, also using lightweight ANN. To the best of our knowledge, this is the first use of ANN-based attention in autoregressive diffusion models in a fully training-free manner, requiring no retraining or fine-tuning.
Experiments on autoregressive video diffusion and video world models show that our approach yields up to \textbf{$\times 5$--$\times 10$} end-to-end speedup (Figure~\ref{fig:fig1}), while maintaining nearly constant GPU memory over long rollouts, whereas baselines accumulate cache with increasing memory, and degrade video quality.

\section{Related Work}

\noindent\textbf{Autoregressive video diffusion models and video world models.}
Modern video diffusion~\cite{Wang2025WanOA, hunyuanworld2025tencent,HaCohen2024LTXVideoRV} models often adopt diffusion transformers (DiTs), which scale well but make video generation expensive due to long spatiotemporal token sequences and repeated denoising steps. To extend diffusion to long horizons, recent work~\cite{yin2025causvid,huang2025selfforcing} studies \emph{autoregressive} (chunked) video diffusion, repeatedly denoising the next segment conditioned on previously generated history. \textsc{Self Forcing}~\cite{huang2025selfforcing} reduces train--test mismatch by rolling out autoregressive generations with KV caching during training, optimizing predictions on self-generated context with a video-level objective. \textsc{Rolling Forcing}~\cite{liu2025rolling} targets multi-minute streaming by relaxing strict causality using joint denoising with increasing noise levels and by retaining early KV states as an attention sink to reduce error accumulation. In parallel, \textsc{LongVie2}~\cite{gao2025longvie2} frames long video generation as world modeling and proposes a staged training recipe to improve controllability and long-term consistency. Complementary to generation, \textsc{DiffTrack}~\cite{difftrack} studies temporal correspondence signals in video DiTs, showing that query--key similarity at specific layers/timesteps enables matching across frames for zero-shot point tracking.

\noindent\textbf{KV compression and caching for autoregressive video diffusion.}
Since diffusion inference evaluates the denoiser repeatedly across timesteps, training-free caching can reuse intermediate computations when changes are small. \textsc{TeaCache}~\cite{TeaCache} uses a lightweight timestep-aware estimator to decide when cached outputs can be reused, providing speedups with limited quality loss. For long-horizon autoregressive generation, \textsc{FlowCache}~\cite{FlowCache} proposes chunkwise recomputation policies and importance-based KV cache compression to bound memory while accelerating ultra-long rollouts. Unlike \cite{TeaCache} and \cite{FlowCache}, we directly compress the \emph{attention} KV cache by leveraging temporal correspondence across frames, merging redundant keys that track the same content over time. This training-free, correspondence-driven merging bounds KV growth and stabilizes both latency and peak memory over long rollouts.

\noindent\textbf{Self-attention sparsification for diffusion models.}
A major bottleneck in video diffusion transformers is the quadratic cost of full 3D self-attention over space--time tokens.
\textsc{SVG}~\cite{SVG1} accelerates \emph{offline} video diffusion by exploiting structured head sparsity, while \textsc{SVG2}~\cite{SVG2} improves token selection using semantic clustering and token permutation to reduce sparse-kernel inefficiency. \textsc{Radial Attention}~\cite{li2025radial} uses an energy-decay prior to build static spatiotemporal masks with sub-quadratic cost. These methods target full-video generation; in contrast, self-attention sparsification tailored to autoregressive/chunked diffusion, with causality, KV caching, and long rollouts, remains largely unexplored.

\noindent\textbf{Approximate nearest neighbor search.}
Approximate nearest neighbor (ANN) search accelerates high-dimensional similarity retrieval by trading exactness for speed, often using an offline preprocessing phase, e.g., graph indices such as HNSW~\cite{Malkov2016EfficientAR}, hashing-based methods, or clustering-based indices (e.g., inverted files).
Locality-sensitive hashing (LSH)~\cite{lsh} offers a lightweight alternative with principled hash families and multi-table bucket probing, avoiding exhaustive search while providing approximation guarantees. Quantization-based ANN~\cite{quantization} compresses vectors into compact codes; product quantization (PQ) quantizes low-dimensional subspaces and enables fast distance estimation using lookup tables with asymmetric distance computation. Reformer~\cite{reformer} integrates LSH into the Transformer and trains around LSH-based attention routing. In this paper, we use LSH/quantization \emph{training-free at inference time} to approximate nearest-neighbor matching without modifying or retraining the underlying model.

\section{Auto-regressive Video Diffusion Models}

We begin by reviewing the attention formulation used in autoregressive video diffusion models. Given queries $Q \in \mathbb{R}^{N_q \times d}$, keys $K \in \mathbb{R}^{N_k \times d}$, and values $V \in \mathbb{R}^{N_k \times d_v}$, attention is computed as
\begin{equation}
O = \mathrm{softmax}\!\left(\frac{Q \hat{K}^\top}{\sqrt{d}}\right)\hat{V},
\end{equation}
where $\hat{K}$ and $\hat{V}$ denote the KV cache, formed by concatenating keys and values from the current frame with those from all previously generated frames. In cross-attention layers, $\hat{K}$ and $\hat{V}$ correspond to projected prompt tokens of the current frame.
As generation proceeds, the KV cache grows with the number of generated frames, so the per-step attention cost scales linearly with cache length (and the cumulative attention work over a $T$-frame rollout can grow as $\mathcal{O}(T^2)$), leading to increasing inference latency and memory usage.

\begin{figure}[t!]
	\centering
	\includegraphics[width=0.85\linewidth]{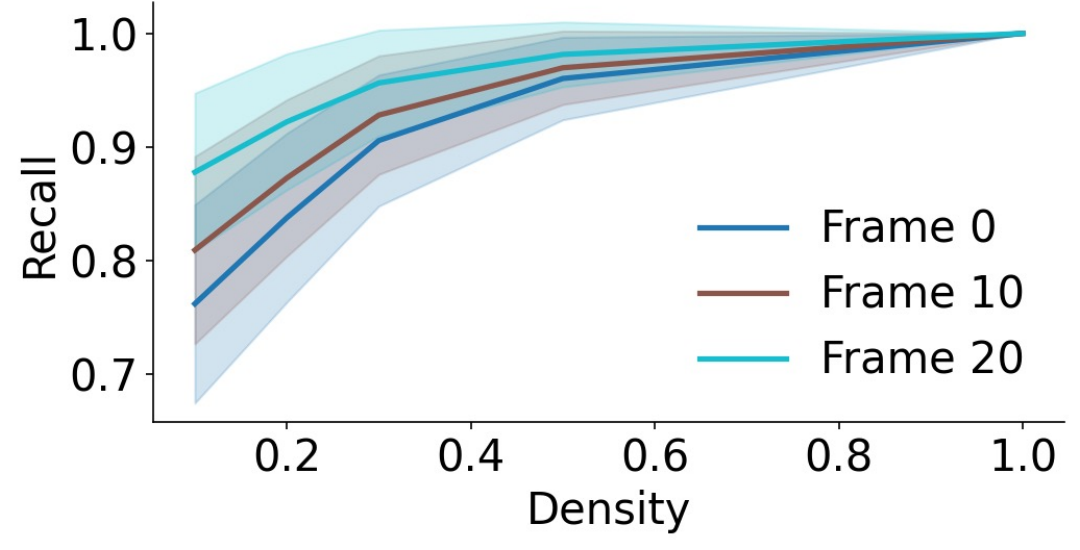}
	\caption{\textbf{Attention sparsity in autoregressive video diffusion.}
Attention recall vs.\ density on Rolling-Forcing~\cite{liu2025rolling}, averaged over transformer blocks (shaded: std). Density is induced by keeping only the highest-attention entries. This achieves high recall, e.g., $\approx$85\% at 30\% density, indicating substantial sparsity.
}

	\label{fig:recall}
\end{figure}

\begin{figure*}[t!]
	\centering
	\includegraphics[width=0.9\linewidth]{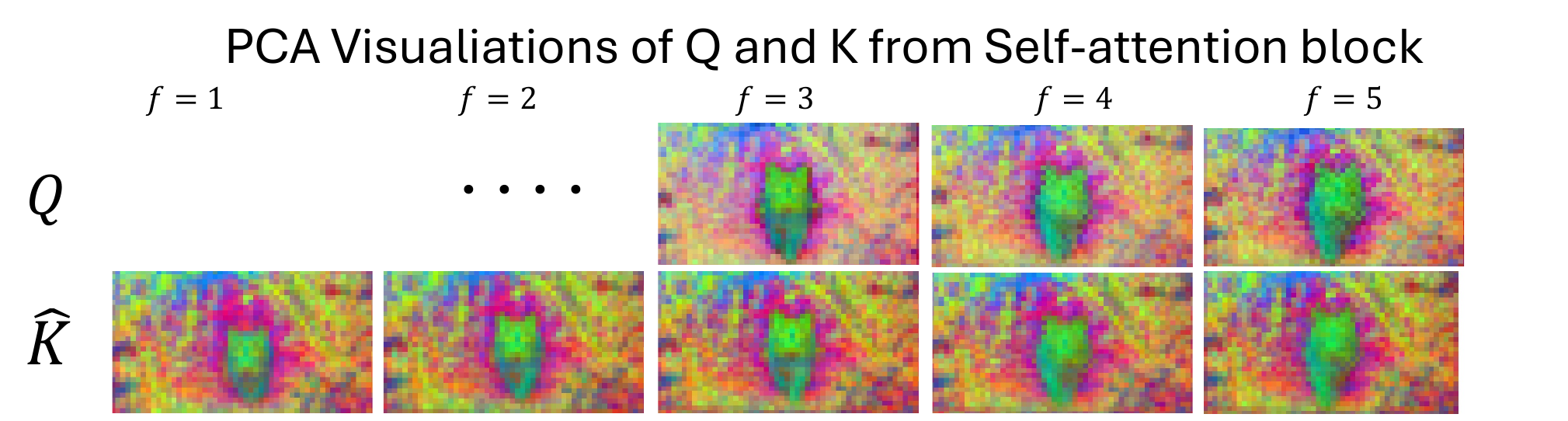}
	\caption{\textbf{Semantic structure and temporal redundancy in self-attention features.} PCA of self-attention queries $Q$ and cached keys $\hat{K}$ across frames (similar colors denote nearby embeddings) for a generated video of a cat walking toward the camera. The features exhibit semantic clustering (foreground vs.\ background) and strong key repetition across frames, motivating KV-cache compression.}

	\label{fig:motivation}
\end{figure*}

\section{Motivation}
\label{sec:motivation}

\noindent\textbf{Does attention in autoregressive video diffusion exhibit sparsity?} 
Prior work~\cite{SVG2} has shown that self-attention in \emph{offline} video diffusion models is naturally sparse: only a small fraction of attention computations substantially contribute to the final output. However, it remains unclear whether similar sparsity exists in the \emph{autoregressive} setting, where frames are generated sequentially and the attention context grows over time.

To investigate this, we generate 100 videos with Rolling-Forcing~\cite{liu2025rolling} and quantify attention sparsity via \emph{attention recall}, defined as the fraction of dense attention mass preserved after retaining only a top fraction of the largest attention entries. Specifically, for each transformer block, we keep the highest-attention entries, compute recall relative to dense attention, and report the mean and standard deviation across blocks. As shown in Figure~\ref{fig:recall}, attention in autoregressive video diffusion models is highly sparse: retaining only $30\%$ of the computations already preserves more than $85\%$ of the attention mass, indicating substantial headroom for improving efficiency.

\noindent\textbf{Where does sparsity arise during generation?}
To understand the source of this sparsity, we analyze the internal representations of attention modules across different generation timesteps. Our objective is to identify redundant computations that can be reduced or removed without degrading generation quality.

We first focus on the self-attention layers and examine the query ($Q$) and key ($\hat{K}$) features throughout generation. 
Figure~\ref{fig:motivation} illustrates Principal Component Analysis (PCA) applied to Q and K features. Tokens with similar colors correspond to features that are close in the original embedding space.  The visualization reveals two key properties. First, both Q and K are predominantly semantic: foreground tokens (e.g., the cat) cluster together and attend to corresponding semantic regions, while background tokens form separate clusters, consistent with prior findings that attention features encode object-level structure~\cite{OmnimatteZero, story2board}. Second, a large fraction of key features repeat across frames, with many keys from earlier frames reappearing later. This repetition persists across diffusion timesteps and in almost all transformer layers, suggesting that aggressive KV-cache compression is feasible.

\begin{figure*}[t!]
	\centering
	\includegraphics[width=0.9\linewidth]{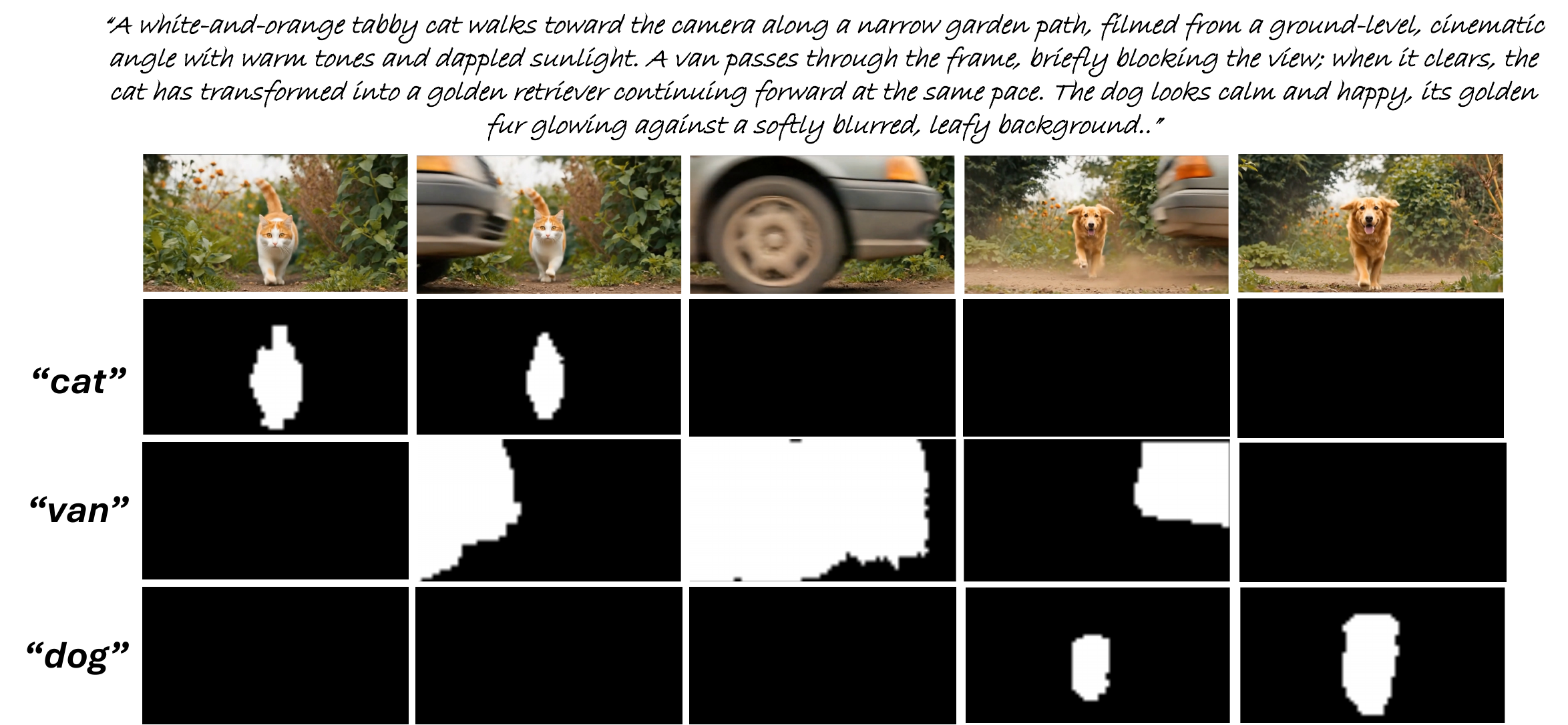}
	\caption{\textbf{Cross-attention is frame-selective.} Input prompt (top) with Per-token cross-attention maps (bottom) for ``cat'', ``van'', and ``dog'' across frames. Attention concentrates on the tokens relevant to the current content (cat early, van during occlusion, dog after transformation), suggesting that pruning irrelevant prompt tokens per frame can reduce cross-attention compute.}

	\label{fig:ca_vis}
\end{figure*}

We next examine the cross-attention layers by analyzing the attention maps between textual prompt tokens and video latents. As depicted in Figure \ref{fig:ca_vis}, autoregressive video diffusion models are typically conditioned on long, detailed prompts~\cite{huang2025selfforcing, liu2025rolling} that describe objects, actions, and events spanning the entire video. As a result, cross-attention incurs substantial computational overhead: at every layer and generation timestep, each query attends to all prompt tokens, even though only a small subset is relevant for synthesizing the current frame.
Cross-attention is highly selective: for each frame, most attention mass concentrates on a small subset of prompt tokens, while the rest contribute little. This observation suggests that dynamically selecting a frame-specific subset of relevant prompt tokens can dramatically reduce cross-attention computation without sacrificing quality.

\section{Method}

Our goal is to accelerate autoregressive video diffusion by reducing the compute and memory cost of attention without degrading generation quality. We do so with three components: (i) \textbf{TempCache} for temporal KV-cache compression, (ii) \textbf{AnnSA} for sparse self-attention, and (iii) \textbf{AnnCA} for sparse cross-attention; all three use fast approximate nearest-neighbor matching to select a small set of candidate tokens before computing attention.

\subsection{Attention as Approximate Nearest Neighbor Search}
\label{sec:ann}

Directly computing attention scores via $Q\hat{K}^\top$ is expensive. Efficient kernels such as FlashAttention~\cite{flashattenion3} and FlashInfer~\cite{ye2025flashinfer} avoid explicitly materializing this matrix, but the overall cost still scales with the number of cached keys per step, and thus becomes prohibitive over long autoregressive generations.

Motivated by our empirical findings (Section~\ref{sec:motivation}), which show that only a small fraction of keys meaningfully contribute to each query, we reinterpret attention as an \emph{approximate nearest neighbor} (ANN) problem. For each query, the goal is to identify the keys with the largest dot products and compute attention only over them.

Formally, instead of attending to all cached keys $\hat{K}$ (and values $\hat{V}$), we first retrieve a small candidate set of key indices
\[
\mathcal{N}(q) \subseteq \{1,\dots,|\hat{K}|\},
\]
and denote by $\hat{K}_{\mathcal{N}(q)} \in \mathbb{R}^{|\mathcal{N}(q)| \times d}$ and
$\hat{V}_{\mathcal{N}(q)} \in \mathbb{R}^{|\mathcal{N}(q)| \times d_v}$
the corresponding rows of the KV cache. We then approximate attention as
\[
\mathrm{Attn}(q) \approx \mathrm{Softmax}\!\left(\frac{q \hat{K}_{\mathcal{N}(q)}^\top}{\sqrt{d}}\right)\hat{V}_{\mathcal{N}(q)}.
\]

Since ANN search must be performed repeatedly across layers and diffusion timesteps, it must incur minimal preprocessing overhead. We consider two lightweight approximate nearest neighbor strategies:

\noindent\textbf{Locality-Sensitive Hashing (LSH).}
We project queries and keys into a shared low-dimensional random subspace and bucket them with locality-sensitive hashing~\cite{lsh}, so that a query only compares against keys that land in the same bucket(s).

\noindent\textbf{Quantized Similarity Search.}
Alternatively, we quantize~\cite{quantization} queries and keys to a low-bit representation and perform nearest-neighbor search directly in the quantized space.

Both approaches reduce matching cost by limiting dot products to a small candidate set, lowering compute and memory bandwidth while retaining sufficient attention recall, and do so without offline preprocessing, enabling fast inference-time matching.
After candidates are selected, attention is computed with sparse attention kernels~\cite{ye2025flashinfer}. 

Having established a fast approximation for attention computation, we next focus on reducing the size of the KV cache itself.

\subsection{KV Cache Compression using Temporal Correspondence}

\begin{figure}[h!]
	\centering
	\includegraphics[width=\linewidth]{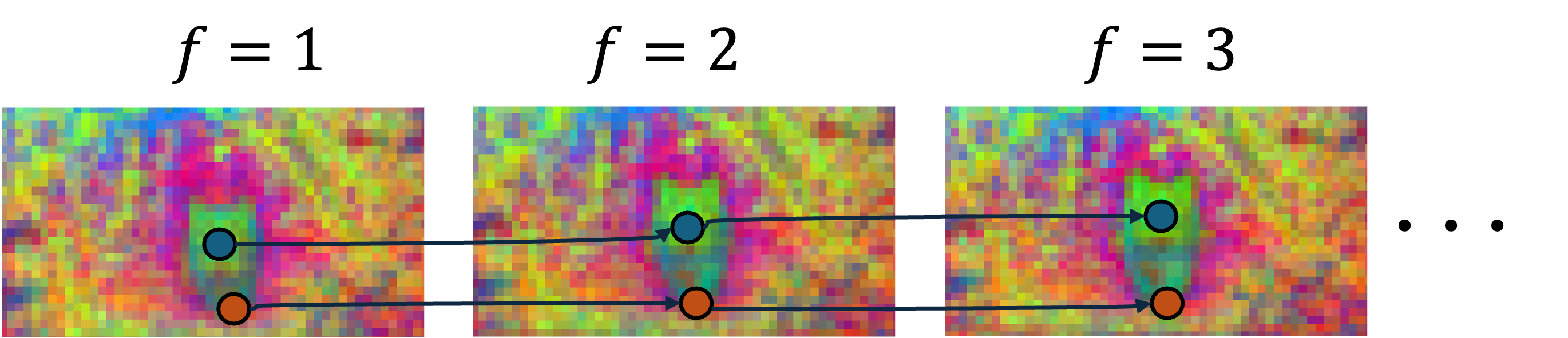}
	\caption{\textbf{Temporal correspondence for KV compression.} Many key features persist across frames and can be matched using temporal correspondence (colored dots/arrows). Correspondences are recovered by selecting, for each current-frame query, the most related key in a previous frame~\cite{difftrack}}

	\label{fig:temoral_correspondance}
\end{figure}

In Section~\ref{sec:motivation}, we showed that key features exhibit substantial temporal redundancy: many keys from earlier frames reappear nearly unchanged in later frames. This observation suggests that the KV cache can be dramatically compressed by identifying temporally corresponding tokens and retaining only representative keys.
Figure~\ref{fig:temoral_correspondance} illustrates this phenomenon: tokens corresponding to the same semantic regions persist across frames and follow consistent trajectories in feature space.

\noindent\textbf{Finding temporal correspondences.}
To identify corresponding tokens across frames, we build on the observation from DiffTrack~\cite{difftrack} that temporal correspondence can be recovered directly from attention. Specifically, for a query token in the current frame, its corresponding token in a previous frame is the key with the highest attention score.
To avoid explicitly computing $QK^\top$, we instead use our ANN machinery (section \ref{sec:ann}) to retrieve the top-1 nearest neighbor key for each query. This provides an efficient estimate of temporal correspondence during generation.

Once correspondences are established, we group keys across frames that correspond to the same semantic token. For each group, we retain only the recent representative key in the KV cache.
However, naively removing keys raises three concerns: (1) How to apply this compression without explicitly forming the attention score matrix $QK^{\top}$, since current kernels do not materialize it, (2) how to correctly aggregate values $\hat{V}$, and (3) whether attention computed on the compressed cache remains accurate. We address all concerns with the following lemma, which shows that attention over duplicate keys can be computed \emph{exactly} using a grouped representation.

\begin{lemma}[Redundancy-free attention]
\label{lem:redundancy_free_attention}
Let $q \in \mathbb{R}^{d_k}$ be a query, and let
$K = (k_1, \dots, k_n)^\top$ and $V = (v_1, \dots, v_n)^\top$
denote keys and values. Suppose the indices are partitioned into groups
$\{G_t\}_{t=1}^g$ such that all keys within a group are identical.
Then the standard attention over $(K,V)$ is \emph{exactly} equal to an
attention computed over the $g$ group representatives: for group $t$,
the logit is shifted by $+\log m_t$, where $m_t = |G_t|$, and the
associated value is the mean of $\{v_i\}_{i\in G_t}$.
\end{lemma}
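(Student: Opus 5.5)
The plan is to expand the softmax explicitly and regroup the sums in its numerator and denominator by group. Fix the query $q$ and write $s_i = q^\top k_i/\sqrt{d_k}$ for the logits, so the standard attention output is
\[
O = \frac{\sum_{i=1}^n e^{s_i} v_i}{\sum_{j=1}^n e^{s_j}}.
\]
Since the groups $\{G_t\}_{t=1}^g$ partition $\{1,\dots,n\}$, both sums split into sums over groups. Within each group all keys are identical, say $k_i = \kappa_t$ for $i \in G_t$. The logit is therefore constant on the group, $s_i = \sigma_t := q^\top \kappa_t/\sqrt{d_k}$.

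For the denominator, this gives $\sum_{i\in G_t} e^{s_i} = m_t e^{\sigma_t} = e^{\sigma_t + \log m_t}$. For the numerator, it gives
\[
\sum_{i\in G_t} e^{s_i} v_i = e^{\sigma_t}\sum_{i\in G_t} v_i = e^{\sigma_t+\log m_t}\,\bar v_t, \qquad \bar v_t := \frac{1}{m_t}\sum_{i\in G_t} v_i .
\]
Substituting both into $O$ yields
\[
O = \frac{\sum_{t=1}^g e^{\sigma_t+\log m_t}\,\bar v_t}{\sum_{t=1}^g e^{\sigma_t+\log m_t}},
\]
which is a softmax over the $g$ shifted logits $\sigma_t+\log m_t$ applied to the mean values $\bar v_t$. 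This is exactly the claimed grouped attention.

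There is no genuine obstacle; the only points needing care are bookkeeping. First, the partition must be used so that no index is counted twice or omitted. Second, $m_t \ge 1$ must hold so that $\log m_t$ is defined. Third, exactness needs no assumption on the values: the $v_i$ inside a group may differ, and only the keys must coincide.

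I would close with two remarks. The result is invariant to the usual max-subtraction used for numerical stability, since both forms are ratios of the same sums. The $+\log m_t$ shift can be realized in existing kernels as an additive attention bias, so the score matrix $QK^\top$ never needs to be materialized.
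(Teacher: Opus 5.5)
Your proposal is correct and follows essentially the same argument as the paper. You split the softmax numerator and denominator over the partition, use that the logit is constant on each group, and rewrite $m_t e^{\sigma_t}$ as $e^{\sigma_t+\log m_t}$ to expose the group-mean value, which is exactly the paper's regrouping proof (with the side remark that groups must be nonempty for $\log m_t$ to be defined, which the paper leaves implicit).
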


The full statement and proof are given in the appendix.

In other words, the lemma states that merging duplicate keys incurs \emph{no} approximation error: when redundancy is exact, attention can be computed exactly from a compressed KV cache. In practice, temporal features are only approximately redundant, so keys are merged using a similarity threshold, yielding a controlled approximation. In the worst case, when no redundancy is detected, the method reduces to standard dense attention. Importantly, the procedure is kernel-agnostic: it modifies only the \emph{inputs} to attention (the cached $K,V$) and requires no changes to the attention kernel itself. We refer to this approach as \textbf{TempCache} (\textbf{Temp}oral correspondence KV \textbf{Cache} compression).

\subsection{Cross-Attention Redundancy}
\label{sparse_ca}

Autoregressive video diffusion models are often conditioned on long textual prompts (Figure~\ref{fig:ca_vis}), making cross-attention expensive: at each layer and timestep, queries attend to all prompt tokens, although only a few are relevant for the current frame.
We reduce this overhead by selecting only the prompt tokens that are relevant to the current frame. Crucially, we avoid computing dense attention maps between all latent queries and all prompt keys. Instead, we leverage approximate nearest neighbor (ANN) search to identify relevant prompt tokens efficiently.
Concretely, we project both the latent queries of the current frame and the prompt keys into a shared LSH- or quantized embedding space. For each prompt token, we check whether it has at least one neighboring latent query in its bucket. Prompt tokens that do not share any bucket with any current-frame query are considered irrelevant and excluded from cross-attention computation for that frame. We call this approach \textbf{AnnCA} (\textbf{A}pproximate \textbf{N}earest-\textbf{N}eighbor \textbf{C}ross-\textbf{A}ttention).

\subsection{Self-Attention Redundancy}

Self-attention in video diffusion models exhibits strong structural patterns: tokens tend to attend primarily to other tokens that are semantically related and spatially or temporally proximate (Figure~\ref{fig:motivation}). We exploit this structure to sparsify self-attention by reusing the semantic buckets discovered during cross-attention pruning (Section~\ref{sparse_ca}), effectively transferring prompt-induced semantic grouping into self-attention. Each token’s query is assigned to one or more buckets, and during self-attention, we restrict it to attend only to keys within the same bucket(s), enforcing semantic locality. We compute this bucketed attention efficiently using block-sparse attention kernels (e.g., FlashInfer~\cite{ye2025flashinfer}. We call this approach \textbf{AnnSA} (\textbf{A}pproximate \textbf{N}earest \textbf{N}eighbor \textbf{S}elf \textbf{A}ttention).

\section{Experiments}

\subsection{Setup}

\noindent\textbf{Models.}
We evaluate on state-of-the-art open-source autoregressive video diffusion and world models:
\textbf{(i) Rolling-Forcing}~\cite{liu2025rolling}, a real-time long-horizon autoregressive video diffusion method for multi-minute generation and
\textbf{(ii) LongVie2}~\cite{gao2025longvie2}, an autoregressive transformer-based world model that frames long video generation as world modeling and introduces a staged training recipe (multimodal guidance and history alignment) to improve controllability and long-term consistency.

\noindent\textbf{Metrics.}
Following prior work on efficient video diffusion attention~\cite{SVG1, SVG2}, we evaluate fidelity to the dense-attention baseline using PSNR, SSIM, and LPIPS computed between videos generated with our approach and videos generated with Dense FlashAttention-3 (FA3) under matched prompts and random seeds.
We additionally report LongVBench~\cite{SVG2}/LongVGenBench~\cite{gao2025longvie2} scores as measures of perceptual video quality.
To quantify attention efficiency, we report the attention density (fraction of executed query--key interactions relative to dense attention) and attention recall (fraction of attention mass preserved by the sparse pattern).
Finally, we report per-component and end-to-end \emph{generation acceleration} as the speedup over Dense FA3.

\begin{table*}[t]
\centering
\small
\setlength{\tabcolsep}{6.0pt}
\renewcommand{\arraystretch}{1.05}
\caption{\textbf{Rolling-Forcing (LongVBench).} Our method achieves the best quality--efficiency trade-off across all settings. TempCache (LSH/Quant) compresses the KV cache with the highest recall while matching dense attention quality. Combining TempCache with ANN-based SA/CA yields up to \textbf{$\times$10.7--$\times$10.8} end-to-end speedup.}

\label{tab:rf_longvbench_compact}
\resizebox{\textwidth}{!}{%
\begin{tabular}{llcccccccc}
\toprule
\textbf{RollingForcing} & \textbf{Method} &
\textbf{PSNR}$\uparrow$ & \textbf{SSIM}$\uparrow$ & \textbf{LPIPS}$\downarrow$ &
\textbf{VBench}$\uparrow$ & 
\textbf{\shortstack{Min\\Density $\downarrow$}} &
\textbf{\shortstack{Max\\Recall $\uparrow$}} &
\textbf{\shortstack{Component \\ Speedup $\uparrow$}} &
\textbf{\shortstack{E2E \\ Speedup $\uparrow$}} \\
\midrule
& Dense (FlashAttention 3) & -- & -- & -- & 84.08 & 100\% & 100\% & $\times$1.0 & $\times$1.0 \\
\midrule
\multicolumn{10}{l}{\textbf{KV compression}} \\
& TeaCache & 16.12 & 0.315 & 0.523 & 84.11 & 93.2\% & 84.6\% & $\times$1.1 & -- \\
& FlowCache & 22.15 & 0.634 & 0.222 & 84.15 & 82.9\% & 86.9\% & $\times$2.3 & -- \\
& \textbf{TempCache-LSH (ours)} & \textbf{24.13} & \textbf{0.651} & \textbf{0.149} & \textbf{84.17} & \textbf{16.8\%} & \textbf{90.2\%} & \textbf{$\times$6.8} & \textbf{$\times$3.0} \\
& \textbf{TempCache-Quant (ours)} & \textbf{24.26} & \textbf{0.653} & \textbf{0.143} & \textbf{84.19} & \textbf{16.2\%} & \textbf{91.4\%} & \textbf{$\times$6.9} & \textbf{$\times$3.2} \\
\midrule
\multicolumn{10}{l}{\textbf{Sparse Self-Attention}} \\
& SVG1 & 14.22 & 0.201 & 0.744 & 33.15 & 76.1\% & 30.2\% & $\times$0.1 & -- \\
& SVG2 & 14.29 & 0.226 & 0.736 & 34.66 & 68.8\% & 38.9\% & $\times$0.2 & -- \\
& RadialAttn & 16.87 & 0.289 & 0.702 & 61.51 & 81.6\% & 58.3\% & $\times$2.8 & -- \\
& \textbf{AnnSA-LSH (ours)} & \textbf{25.73} & \textbf{0.688} & \textbf{0.142} & \textbf{83.25} & \textbf{27.6\%} & \textbf{92.4\%} & \textbf{$\times$5.1} & \textbf{$\times$2.7} \\
& \textbf{AnnSA-Quant (ours)} & \textbf{25.77} & \textbf{0.689} & \textbf{0.141} & \textbf{83.29} & \textbf{28.0\%} & \textbf{92.6\%} & \textbf{$\times$5.2} & \textbf{$\times$2.8} \\
\midrule
\multicolumn{10}{l}{\textbf{Sparse Cross-Attention}} \\
& \textbf{AnnCA-LSH (ours)} & \textbf{25.68} & \textbf{0.679} & \textbf{0.155} & \textbf{83.23} & \textbf{33.1\%} & \textbf{94.2\%} & \textbf{$\times$2.2} & \textbf{$\times$1.2} \\
& \textbf{AnnCA-Quant (ours)} & \textbf{24.11} & \textbf{0.646} & \textbf{0.148} & \textbf{82.89} & \textbf{29.5\%} & \textbf{91.1\%} & \textbf{$\times$2.3} & \textbf{$\times$1.2} \\
\midrule
\multicolumn{10}{l}{\textbf{Full}} \\
& FlowCache+RadialAttn & 16.98 & 0.294 & 0.687 & 45.15 & -- & -- & -- & $\times$4.4 \\
& \textbf{All Ours-LSH} & \textbf{25.71} & \textbf{0.681} & \textbf{0.147} & \textbf{84.02} & -- & -- & -- & \textbf{$\times$10.7}\\
& \textbf{All Ours-Quant} & \textbf{25.73} & \textbf{0.678} & \textbf{0.147} & \textbf{83.99} & -- & -- & -- & \textbf{$\times$10.8} \\
\bottomrule
\end{tabular}%
}
\end{table*}

\begin{figure*}[t!]
	\centering
	\includegraphics[width=0.9\linewidth]{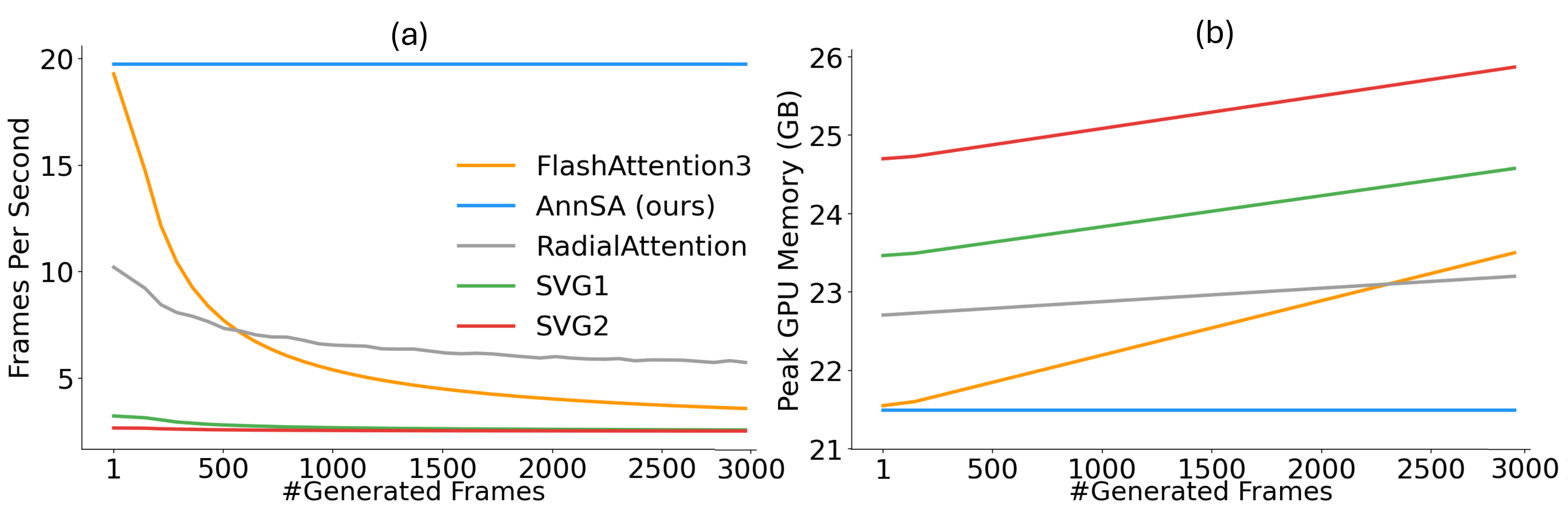}
	\caption{\textbf{Scaling with generation length in Rolling-Forcing.} 
(a) \emph{Throughput.} As context grows, Dense FA3 slows substantially, and prior sparsification baselines (SVG1/2, RadialAttention) fail to maintain throughput due to heavy per-block preprocessing repeated across blocks, timesteps, and frames. Our method sustains nearly constant FPS over a 3K-frame rollout, keeping attention cost effectively independent of cache length. 
(b) \emph{Peak memory.} FA3 and baselines exhibit increasing GPU memory as the KV cache expands, while our memory remains flat, consistent with a bounded cache.}

	\label{fig:time_memory_scaling}
\end{figure*}

\begin{figure*}[t!]
	\centering
	\includegraphics[width=0.9\linewidth]{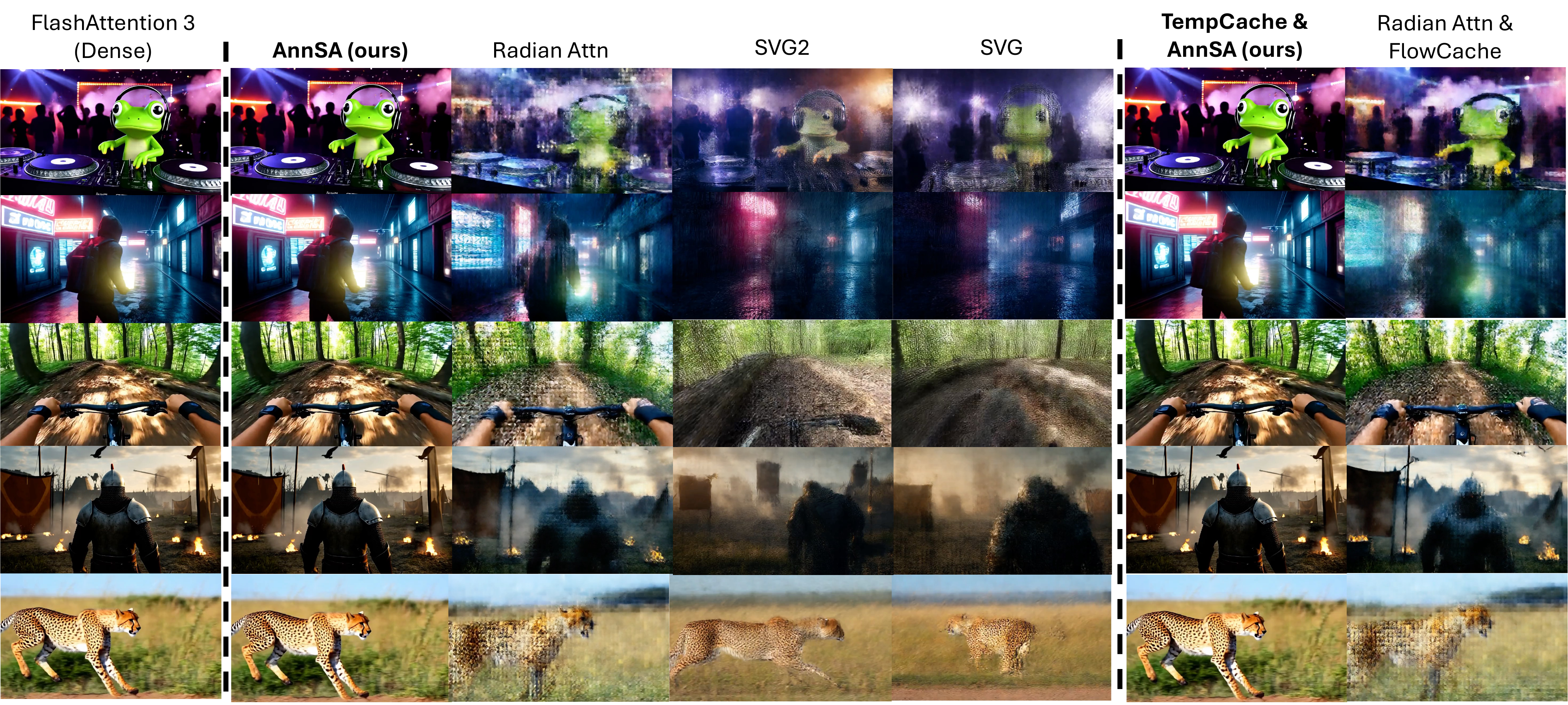}
	\caption{\textbf{Qualitative results on long-video generation with Rolling-Forcing.} Our approach preserves the visual fidelity and temporal consistency of Dense FA3 across diverse prompts, while current sparsification baselines (SVG1/2) often exhibit artifacts and drift; RadialAttention is more stable but still degrades in challenging scenes.}

	\label{fig:qualitative}
\end{figure*}

\noindent\textbf{Datasets.}
Consistent with~\cite{SVG1,SVG2}, we evaluate on \textbf{LongVBench}, a long-horizon benchmark based on VBench prompts (Penguin Benchmark with the released prompt optimization), designed to stress long-context generation. We also evaluate on \textbf{LongVGenBench}~\cite{gao2025longvie2}, a long-video generation benchmark that measure long-range quality under extended generation.

\noindent\textbf{Baselines.}
We compare our approach with:
\textbf{(i) Dense attention}, implemented using FlashAttention-3~\cite{flashattenion3} as our primary exact-attention baseline;
\textbf{(ii) KV-cache/computation reuse methods}, including TeaCache~\cite{TeaCache}, a training-free timestep-aware caching strategy that reuses intermediate computations when consecutive denoising steps are similar, and FlowCache~\cite{FlowCache}, a caching framework tailored to autoregressive video generation with chunk-wise policies and importance-based cache control; and
\textbf{(iii) Sparse attention methods}, including SVG~\cite{SVG1}, SVG2~\cite{SVG2}, and RadialAttn~\cite{li2025radial}, which we adapt and implement for autoregressive video diffusion models to enable a direct comparison in the streaming setting. 

For all baselines, we perform a full grid search over the reported hyperparameters to obtain the best performance in our setting.
In the supplementary material, we further report results on the exact same datasets, settings, and evaluation tables used in the corresponding baseline papers (for both short video generation and KV-cache compression).

\noindent\textbf{Implementation details.} 
All experiments are run on a single H100 GPU. To measure long-horizon scaling, we generate streams of 3000 frames \emph{without} imposing a context-window bound on the KV cache for any method. Following prior work~\cite{SVG1}, sparse attention is disabled for the first 30\% of denoising steps, and sparsification is applied only to blocks that empirically exhibit sparsity (about 70\% of blocks). We use FAISS~\cite{faiss} for LSH and quantization-based ANN retrieval, and FlashInfer~\cite{ye2025flashinfer} kernels for sparse attention. Unless stated otherwise, quantization uses 8-bit (ablations in Supp.). Qualitative figures show one representative frame per video.

\section{Results}
\subsection{Quantitative Results}

Table~\ref{tab:rf_longvbench_compact} compares our method with Dense FlashAttention-3 and current SoTA baselines on Rolling-Forcing LongVBench. It shows that our approach achieves the best quality–efficiency trade-off across all settings: both \textbf{TempCache-LSH/Quant} compress the KV cache aggressively (down to $\sim$16\% Min Density) while preserving high attention recall ($\sim$90--91\%) and matching dense quality (VBench $\approx$84.1). In contrast, TeaCache and FlowCache provide only modest speedups ($\times$1.1--$\times$1.3) while retaining most of the dense computation. For attention sparsification, offline-designed baselines (SVG1/2, RadialAttn) substantially degrade quality and/or recall, whereas our ANN-based self- and cross-attention pruning maintains near-dense quality at low density with consistently high recall. Finally, combining cache compression with SA/CA sparsity yields the strongest overall gains: \textbf{All Ours-LSH/Quant} achieves up to \textbf{$\times$10.7--$\times$10.8} end-to-end speedup while preserving FA3-level quality, while the best baseline combination (FlowCache+RadialAttn) reaches only $\times$4.4 and collapses in quality.
Results for generative world-model (LongVie2) in Supp. 

Figure~\ref{fig:time_memory_scaling} shows how Rolling-Forcing~\cite{liu2025rolling} scales with generation length.
\textbf{(a) Throughput.} As the KV cache grows, the dense attention baseline (FA3) slows down sharply, with frame-per-second (FPS) dropping continuously over the 3K-frame rollout. Existing sparsification baselines (SVG1/2, RadialAttention) also fail to sustain throughput in this long-horizon regime, largely due to substantial per-block preprocessing (e.g., clustering or energy/decay estimation) that must be repeated across transformer blocks, diffusion timesteps, and frames. In contrast, our method maintains constant FPS throughout generation: SA sparsity prevents attention cost from increasing with context length, and our ANN matching remains lightweight and kernel-friendly at scale. 
\textbf{(b) Peak memory.} Peak GPU memory for the dense attention baseline and current approaches increases with the expanding KV cache, whereas our method stays constant.
The same trend is also observed for world models; results on LongVie2 are provided in the supplementary material.

\subsection{Qualitative Results}
Figure~\ref{fig:qualitative} provides qualitative comparisons on long video generation with Rolling-Forcing. Across diverse prompts (characters, landscapes, fast motion, and complex textures), our method preserves the key visual attributes of Dense FlashAttention-3, including subject identity, fine details, and coherent motion, while sustaining stable appearance over time. In contrast, baseline sparse attention methods adapted from offline diffusion (e.g., SVG1/2) frequently introduce severe artifacts and temporal drift, leading to blurred structure, washed-out textures, or identity collapse as generation progresses. RadialAttention is more stable than SVG-style sparsity but still shows noticeable degradation relative to dense attention in challenging scenes.  More qualitative results can be found in the appendix.

\section{Summary}

In this paper, we study the attention in autoregressive video diffusion, where KV-cache growth makes streaming generation slower and more memory-intensive over time. We identify three redundancies: duplicate keys across frames, slowly evolving semantic Q/K, and expensive cross-attention over long prompts. We introduce a unified, training-free framework: \textbf{TempCache} (temporal KV merging), \textbf{AnnCA} (ANN-based prompt pruning), and \textbf{AnnSA} (ANN-based sparse self-attention). Our method is plug-and-play and requires no retraining or fine-tuning. Experiments show up to \textbf{$\times 5$--$\times 10$} end-to-end speedups with constant GPU memory over long video generations while preserving video quality.

\section*{Impact Statement}
This paper presents work whose goal is to advance the field
of Machine Learning. There are many potential societal
consequences of our work, none which we feel must be
specifically highlighted here.

\bibliography{main}
\bibliographystyle{icml2026}

\newpage
\appendix
\onecolumn

\section{Proof of Lemma~\ref{lem:redundancy_free_attention}}

\begin{lemma}[Redundancy-free attention]
\label{lem:redundancy_free_attention_supp}
Let $q \in \mathbb{R}^{d_k}$ be a query, and let
$K = (k_1, \dots, k_n)^\top \in \mathbb{R}^{n \times d_k}$ and
$V = (v_1, \dots, v_n)^\top \in \mathbb{R}^{n \times d_v}$
denote keys and values. Consider standard scaled dot-product attention
\begin{equation}
\label{eq:standard-attn-supp}
\mathrm{Attn}(q,K,V)
= \frac{\sum_{i=1}^{n} e^{s_i} v_i}{\sum_{i=1}^{n} e^{s_i}},
\qquad
s_i = \frac{q^\top k_i}{\sqrt{d_k}}.
\end{equation}
Assume the index set $\{1,\dots,n\}$ is partitioned into $g$ disjoint groups
$\{G_t\}_{t=1}^g$ such that all keys inside each group are identical, i.e.,
\begin{equation}
\label{eq:group-keys-identical-supp}
k_i = k'_t \quad \forall\, i \in G_t,
\end{equation}
for some representative key $k'_t \in \mathbb{R}^{d_k}$.
Define the group size (multiplicity) $m_t = |G_t|$ and the mean value
\begin{equation}
\label{eq:group-mean-value-supp}
\tilde{v}_t \;=\; \frac{1}{m_t}\sum_{i\in G_t} v_i.
\end{equation}
Let $s_t = \frac{q^\top k'_t}{\sqrt{d_k}}$ and $\tilde{s}_t = s_t + \log m_t$.
Then
\begin{equation}
\label{eq:grouped-attn-supp}
\mathrm{Attn}(q,K,V)
= \sum_{t=1}^{g} \frac{e^{\tilde{s}_t}}{\sum_{u=1}^{g} e^{\tilde{s}_u}} \,\tilde{v}_t.
\end{equation}
Equivalently, attention over $(q,K,V)$ is exactly equal to attention over the
$g$ representative keys $K'=(k'_1,\dots,k'_g)$ and mean values
$\tilde{V}=(\tilde{v}_1,\dots,\tilde{v}_g)$, with an additive bias $\log m_t$
applied to the logits.
\end{lemma}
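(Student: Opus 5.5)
The proof is a direct regrouping of the sums in \eqref{eq:standard-attn-supp} according to the partition $\{G_t\}_{t=1}^g$. Since the groups are disjoint and cover $\{1,\dots,n\}$, I will first rewrite both the numerator and the denominator as double sums $\sum_{t=1}^{g}\sum_{i\in G_t}$. I will then use \eqref{eq:group-keys-identical-supp}: for every $i\in G_t$ we have $s_i = q^\top k'_t/\sqrt{d_k} = s_t$, so the exponential factor is constant within each group.

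Next I will simplify each group's contribution separately. For the denominator, $\sum_{i\in G_t} e^{s_i} = m_t e^{s_t}$. For the numerator, $\sum_{i\in G_t} e^{s_i} v_i = e^{s_t}\sum_{i\in G_t} v_i = m_t e^{s_t}\tilde v_t$ by the definition \eqref{eq:group-mean-value-supp}. The key identity is $m_t e^{s_t} = e^{s_t+\log m_t} = e^{\tilde s_t}$, which is valid because $m_t\ge 1$ for a nonempty group. Substituting, the numerator becomes $\sum_t e^{\tilde s_t}\tilde v_t$ and the denominator becomes $\sum_u e^{\tilde s_u}$. Dividing gives exactly \eqref{eq:grouped-attn-supp}.

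Finally, I will observe that the right-hand side of \eqref{eq:grouped-attn-supp} is, by definition, softmax attention of $q$ over the representative keys $K'$ with values $\tilde V$, where the logits carry the additive bias $\log m_t$. This establishes the ``equivalently'' clause.

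There is no real obstacle here. The only points requiring care are bookkeeping: the groups must be nonempty (implicit in a partition) so that $\log m_t$ is defined, and the softmax denominator is strictly positive, so the division is legitimate. If one wants to match numerically stable kernels, I would add a remark that subtracting $\max_t \tilde s_t$ from all logits leaves the expression unchanged, so the identity holds for the stabilized implementation as well.
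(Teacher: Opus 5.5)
Your proposal is correct and matches the paper's proof step for step: regroup the numerator and denominator over the partition, use that identical keys give the common score $s_t$, and rewrite $m_t e^{s_t}$ as $e^{s_t+\log m_t}$ to get a softmax over the $g$ representatives with mean values $\tilde v_t$. Your remarks on nonempty groups, the positive denominator, and max-subtraction spell out details the paper leaves implicit.
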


\begin{proof}
Because all keys within a group $G_t$ are identical by~\eqref{eq:group-keys-identical-supp},
they induce the same attention score. For any $i \in G_t$,
\begin{equation}
s_i
= \frac{q^\top k_i}{\sqrt{d_k}}
= \frac{q^\top k'_t}{\sqrt{d_k}}
= s_t.
\label{eq:same-score-within-group-supp}
\end{equation}
We now regroup the numerator and denominator of~\eqref{eq:standard-attn-supp} by groups.

\paragraph{Regrouping the numerator.}
\begin{align}
\sum_{i=1}^{n} e^{s_i} v_i
&= \sum_{t=1}^{g} \sum_{i\in G_t} e^{s_i} v_i
 = \sum_{t=1}^{g} \sum_{i\in G_t} e^{s_t} v_i
= \sum_{t=1}^{g} e^{s_t} \sum_{i\in G_t} v_i.
\label{eq:numerator-regroup-supp}
\end{align}

\paragraph{Regrouping the denominator.}
\begin{align}
\sum_{i=1}^{n} e^{s_i}
&= \sum_{t=1}^{g} \sum_{i\in G_t} e^{s_i}
 = \sum_{t=1}^{g} \sum_{i\in G_t} e^{s_t}
 = \sum_{t=1}^{g} m_t e^{s_t}.
\label{eq:denominator-regroup-supp}
\end{align}

Substituting~\eqref{eq:numerator-regroup-supp} and~\eqref{eq:denominator-regroup-supp}
into~\eqref{eq:standard-attn-supp} yields
\begin{equation}
\mathrm{Attn}(q,K,V)
= \frac{\sum_{t=1}^{g} e^{s_t}\sum_{i\in G_t} v_i}{\sum_{t=1}^{g} m_t e^{s_t}}.
\label{eq:attn-after-regroup-supp}
\end{equation}
Next, observe that $m_t e^{s_t} = e^{s_t + \log m_t}$, and rewrite the numerator
to expose the group mean values:
\begin{align}
\mathrm{Attn}(q,K,V)
&= \frac{\sum_{t=1}^{g} e^{s_t}\sum_{i\in G_t} v_i}{\sum_{t=1}^{g} m_t e^{s_t}}
= \frac{\sum_{t=1}^{g} \left(m_t e^{s_t}\right)\left(\frac{1}{m_t}\sum_{i\in G_t} v_i\right)}
{\sum_{t=1}^{g} m_t e^{s_t}} \nonumber\\
&= \frac{\sum_{t=1}^{g} e^{s_t + \log m_t}\,\tilde{v}_t}{\sum_{t=1}^{g} e^{s_t + \log m_t}}.
\label{eq:attn-softmax-form-supp}
\end{align}
Defining $\tilde{s}_t = s_t + \log m_t$ turns~\eqref{eq:attn-softmax-form-supp} into a softmax
over groups:
\begin{equation}
\mathrm{Attn}(q,K,V)
= \sum_{t=1}^{g} \frac{e^{\tilde{s}_t}}{\sum_{u=1}^{g} e^{\tilde{s}_u}}\,\tilde{v}_t,
\end{equation}
which is exactly~\eqref{eq:grouped-attn-supp}.
\end{proof}

Lemma~\ref{lem:redundancy_free_attention_supp} implies that if multiple positions
share identical keys, they can be merged without any approximation error by
(i) averaging their values and (ii) adding a $\log m_t$ bias to the corresponding
logit. When no duplicate keys exist, $m_t=1$ for all $t$ and the formula reduces
to standard attention.

\begin{table*}[t]
\centering
\small
\setlength{\tabcolsep}{3.6pt}
\renewcommand{\arraystretch}{1.05}
\caption{\textbf{LongVie2 (LongVGenBench).} Quality--efficiency trade-off vs.\ dense FA3 across KV compression, sparse SA, sparse CA, and the full system. TempCache (LSH/Quant) compresses the KV cache to $\sim$33.1\% Min Density with high recall, ANN-based SA/CA preserve near-baseline quality at low density, and the full method achieves \textbf{$\times$6.3--$\times$6.9} speedup.}
\label{tab:longvie2_longvgen_compact}
\resizebox{\textwidth}{!}{%
\begin{tabular}{llccccccc}
\toprule
\textbf{LongVie2} & \textbf{Method} &
\textbf{PSNR}$\uparrow$ & \textbf{SSIM}$\uparrow$ & \textbf{LPIPS}$\downarrow$ &
\textbf{LongVGenBench}$\uparrow$ & \textbf{\shortstack{Min\\Density $\downarrow$}} &
\textbf{\shortstack{Max\\Recall $\uparrow$}} &
\textbf{\shortstack{Total \\ Speed $\uparrow$}} \\
\midrule
& FlashAttention 3 & -- & -- & -- & 69.67 & 100\% & 100\% & $\times$1.0 \\
\midrule
\multicolumn{9}{l}{\textbf{KV compression}} \\
& TeaCache & 13.6 & 0.254 & 0.399 & 61.41 & 92.0\% & 85.3\% & $\times$1.1 \\
& FlowCache & 20.6 & 0.315 & 0.357 & 62.78 & 90.4\% & 87.6\% & $\times$1.1 \\
& \textbf{TempCache-LSH (ours)} & \textbf{23.0} & \textbf{0.523} & \textbf{0.243} & \textbf{63.10} & \textbf{33.4\%} & \textbf{91.3\%} & \textbf{$\times$3.5} \\
& \textbf{TempCache-Quant (ours)} & \textbf{23.4} & \textbf{0.531} & \textbf{0.223} & \textbf{63.16} & \textbf{33.1\%} & \textbf{92.4\%} & \textbf{$\times$3.7} \\
\midrule
\multicolumn{9}{l}{\textbf{Sparse SA}} \\
& SVG1 & 11.6 & 0.557 & 0.831 & 24.83 & 78.3\% & 11.8\% & $\times$0.3 \\
& SVG2 & 12.7 & 0.185 & 0.812 & 25.65 & 70.9\% & 15.9\% & $\times$0.5 \\
& RadialAttn & 17.2 & 0.197 & 0.644 & 41.11 & 82.4\% & 44.7\% & $\times$2.7 \\
& \textbf{AnnSA-LSH (ours)} & \textbf{23.9} & \textbf{0.601} & \textbf{0.222} & \textbf{65.32} & \textbf{40.3\%} & \textbf{89.5\%} & \textbf{$\times$5.3} \\
& \textbf{AnnSA-Quant (ours)} & \textbf{23.9} & \textbf{0.633} & \textbf{0.199} & \textbf{66.43} & \textbf{44.9\%} & \textbf{87.0\%} & \textbf{$\times$5.4} \\
\midrule
\multicolumn{9}{l}{\textbf{Sparse CA}} \\
& \textbf{AnnCA-LSH (ours)} & \textbf{24.6} & \textbf{0.634} & \textbf{0.197} & \textbf{66.67} & \textbf{42.8\%} & \textbf{90.0\%} & \textbf{$\times$1.7} \\
& \textbf{AnnCA-Quant (ours)} & \textbf{23.3} & \textbf{0.620} & \textbf{0.203} & \textbf{63.77} & \textbf{44.9\%} & \textbf{85.6\%} & \textbf{$\times$1.8} \\
\midrule
\multicolumn{9}{l}{\textbf{Full}} \\
& FlowCache+RadialAttn & 18.6 & 0.247 & 0.572 & 49.84 & -- & -- & $\times$3.1 \\
& \textbf{All Ours-LSH} & \textbf{23.5} & \textbf{0.611} & \textbf{0.216} & \textbf{64.91} & -- & -- & \textbf{$\times$6.3} \\
& \textbf{All Ours-Quant} & \textbf{23.5} & \textbf{0.613} & \textbf{0.218} & \textbf{63.69} & \textbf{-} & \textbf{-} & \textbf{$\times$6.9} \\
\bottomrule
\end{tabular}%
}
\end{table*}

\begin{figure*}[t!]
	\centering
	\includegraphics[width=\linewidth]{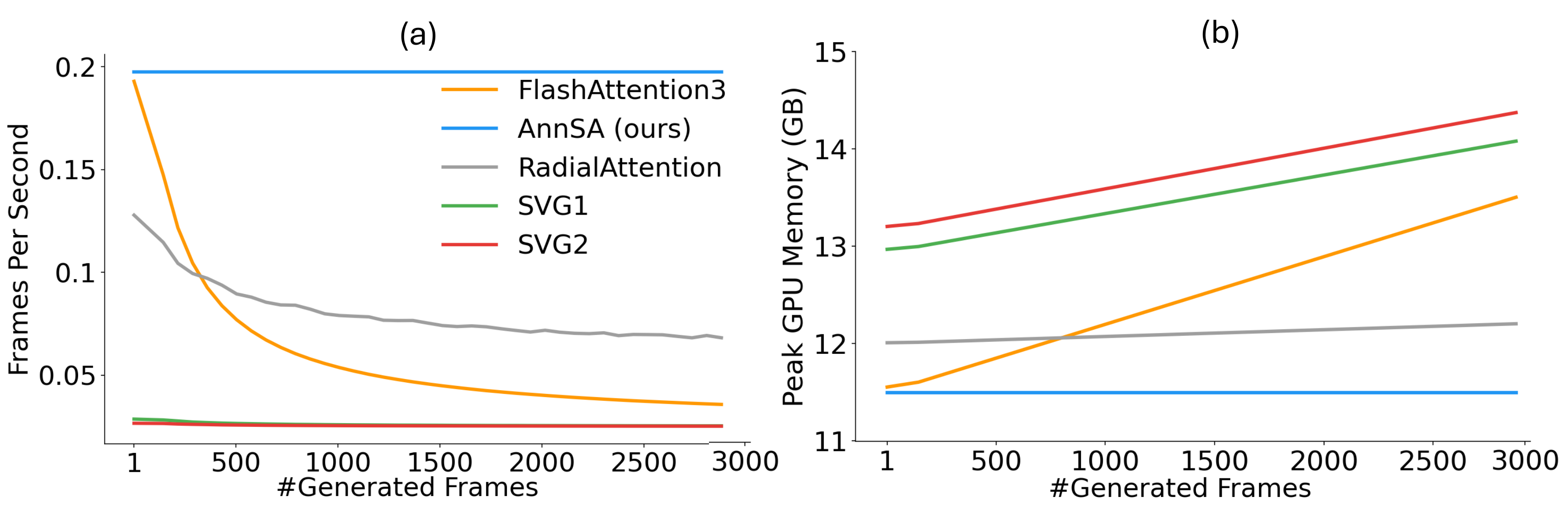}
	\caption{\textbf{Scaling with generation length in LonVie2.}
(a) \emph{Throughput.} As the context grows, dense FA3 slows down, and existing sparsification baselines (SVG1/2, RadialAttention) do not sustain throughput due to substantial per-block preprocessing repeated across transformer blocks, diffusion timesteps, and frames. Our method keeps FPS nearly constant over a 3K-frame rollout, making attention cost effectively independent of cache length.
(b) \emph{Peak memory.} FA3 and baselines show rising GPU memory with the expanding KV cache, whereas our memory stays flat, consistent with a bounded cache.}

	\label{fig:time_memory_scaling_supp}
\end{figure*}

\section{Additional Results}

\subsection{Quantitative Results}
Table~\ref{tab:longvie2_longvgen_compact} reports quantitative results on LongVGenBench with the LongVie2 world model. It shows that our components consistently improve the quality–efficiency trade-off relative to dense FlashAttention-3 and prior baselines. For \textbf{KV compression}, TeaCache/FlowCache provide only $\times$1.1 speedup while retaining $\sim$90\%+ density, whereas TempCache (LSH/Quant) reduces density to $\approx$33\% with high recall (91.3–92.4\%) and improves LongVGenBench (63.10–63.16) at $\times$3.5–$\times$3.7 speedup. For \textbf{sparse self-attention}, offline-designed sparsification (SVG1/2, RadialAttn) severely degrades LongVGenBench (24.83–41.11) and recall, while AnnSA (LSH/Quant) preserves much stronger quality (65.32–66.43) with high recall (87.0–89.5\%) and achieves $\times$5.3–$\times$5.4 speedup. For \textbf{sparse cross-attention}, AnnCA maintains high recall (85.6–90.0\%) and strong quality (63.77–66.67) at $\times$1.7–$\times$1.8 speedup. Finally, the \textbf{full system} delivers the best overall performance: All Ours achieves $\times$6.3–$\times$6.9 speedup wi  th substantially higher LongVGenBench (63.69–64.91) than the strongest baseline combination FlowCache+RadialAttn (49.84), indicating that our method scales effectively to world-model generation without sacrificing quality.

\begin{table}[t]
\centering
\small
\setlength{\tabcolsep}{5pt}
\renewcommand{\arraystretch}{1.15}
\caption{\textbf{TempCache-Quant on MAGI-1 and SkyReels-V2.} Results follow exactly the evaluation protocol of~\cite{FlowCache}. TempCache-Quant achieves the best efficiency--quality trade-off across both models, reducing PFLOPs and latency while maintaining or improving perceptual quality (VBench/LPIPS/SSIM/PSNR).} 

\label{tab:magi_skyreels}
\begin{tabular}{llccccccc}
\toprule
\textbf{Model} & \textbf{Method} &
\textbf{PFLOPs}$\downarrow$ &
\textbf{Speedup}$\uparrow$ &
\textbf{Latency (s)}$\downarrow$ &
\textbf{VBench}$\uparrow$ &
\textbf{LPIPS}$\downarrow$ &
\textbf{SSIM}$\uparrow$ &
\textbf{PSNR}$\uparrow$ \\
\midrule
\multirow{6}{*}{\textbf{MAGI-1}}
& Vanilla         & 306 & 1$\times$    & 2873 & 77.06\% & --     & --     & -- \\
& TeaCache-slow   & 294 & 1.12$\times$ & 2579 & 77.50\% & 0.6211 & 0.2801 & 13.26 \\
& TeaCache-fast   & 225 & 1.44$\times$ & 1998 & 70.11\% & 0.8160 & 0.1138 &  8.94 \\
& FlowCache-slow  & 161 & 1.86$\times$ & 1546 & 78.96\% & 0.3160 & 0.6497 & 22.34 \\
& FlowCache-fast  & 140 & 2.38$\times$ & 1209 & 77.93\% & 0.4311 & 0.5140 & 19.27 \\
& \textbf{TempCache-Quant (ours)} & \textbf{110} & \textbf{4.11$\times$} & \textbf{1009} & \textbf{78.99\%} & \textbf{0.3156} & \textbf{0.6555} & \textbf{23.12} \\
\midrule
\multirow{6}{*}{\textbf{SkyReels-V2}}
& Vanilla         & 113 & 1$\times$    & 1540 & 83.84\% & --     & --     & -- \\
& TeaCache-slow   &  58 & 1.89$\times$ &  814 & 82.67\% & 0.1472 & 0.7501 & 21.96 \\
& TeaCache-fast   &  49 & 2.2$\times$  &  686 & 80.06\% & 0.3063 & 0.6121 & 18.39 \\
& FlowCache-slow  &  36 & 5.88$\times$ &  262 & 83.12\% & 0.1225 & 0.7890 & 23.74 \\
& FlowCache-fast  &  28 & 6.7$\times$  &  230 & 83.05\% & 0.1467 & 0.7635 & 22.95 \\
& \textbf{TempCache-Quant (ours)} & \textbf{11} & \textbf{9.25$\times$} & \textbf{146} & \textbf{83.82\%} & \textbf{0.1160} & \textbf{0.8020} & \textbf{23.99} \\
\bottomrule
\end{tabular}
\end{table}

\begin{table*}[t]
\centering
\small
\setlength{\tabcolsep}{5.0pt}
\renewcommand{\arraystretch}{1.15}
\caption{Results on \textbf{5-second} video generation for HunyuanVideo (117 frames) and Wan2.1-14B (69 frames). AnnSA (LSH/Quant) matches or slightly improves the quality of prior sparse-attention baselines  while staying in a similar efficiency regime. On these short clips, the lowest latency / highest speedup is still achieved by STA (FA3), since sparse-attention kernels introduce extra overhead and are typically less optimized than FlashAttention at short sequence lengths. Our main benefits appear in long-horizon generation where attention and KV-cache growth dominate compute and memory.}
\label{tab:hunyuan_wan_annsa}
\begin{tabular}{llcccccccc}
\toprule
\textbf{Model} & \textbf{Method} &
\textbf{PSNR}$\uparrow$ & \textbf{SSIM}$\uparrow$ & \textbf{LPIPS}$\downarrow$ &
\textbf{Vision Reward}$\uparrow$ &
\textbf{PFLOPs}$\downarrow$ & \textbf{Latency (s)}$\downarrow$ & \textbf{Speedup}$\uparrow$ \\
\midrule
\textbf{HunyuanVideo} \\ \textbf{(117 frames)}
& Original     & --   & --    & --    & 0.141 & 612 & 1649 & -- \\
& STA (FA3)    & 26.7 & 0.866 & 0.167 & 0.132 & 331 & \textbf{719} & \textbf{2.29$\times$} \\
& PA           & 22.1 & 0.764 & 0.256 & 0.140 & 339 & 1002 & 1.65$\times$ \\
& SVG          & 27.2 & 0.895 & 0.114 & 0.144 & 340 &  867 & 1.90$\times$ \\
& RadianAttn   & 27.3 & 0.886 & 0.114 & 0.139 & 339 &  876 & 1.88$\times$ \\
& AnnSA-LSH (ours)   & 27.2 & 0.899 & 0.112 & 0.149 & \textbf{322} &  899 & 1.77$\times$ \\
& AnnSA-Quant (ours) & \textbf{27.4} & \textbf{0.900} & \textbf{0.111} & \textbf{0.150} & 341 &  912 & 1.81$\times$ \\
\midrule
\textbf{Wan2.1-14B} \\ \textbf{(69 frames)}
& Original     & --   & --    & --    & \textbf{0.136} & 560 & 1630 & -- \\
& STA (FA3)    & 22.9 & 0.830 & 0.171 & 0.132 & \textbf{322} & \textbf{812} & \textbf{2.01$\times$} \\
& PA           & 22.4 & 0.790 & 0.176 & 0.126 & 324 &  978 & 1.67$\times$ \\
& SVG          & 23.2 & 0.825 & 0.202 & 0.114 & 324 &  949 & 1.71$\times$ \\
& RadianAttn   & 23.9 & 0.842 & 0.163 & 0.128 & 323 &  917 & 1.77$\times$ \\
& AnnSA-LSH (ours)   & 23.5 & \textbf{0.844} & 0.161 & 0.131 & 354 &  987 & 1.71$\times$ \\
& AnnSA-Quant (ours) & \textbf{27.4} & 0.839 & \textbf{0.159} & 0.133 & 366 &  912 & 1.66$\times$ \\
\bottomrule
\end{tabular}
\end{table*}

\begin{figure*}[t!]
	\centering
	\includegraphics[width=\linewidth]{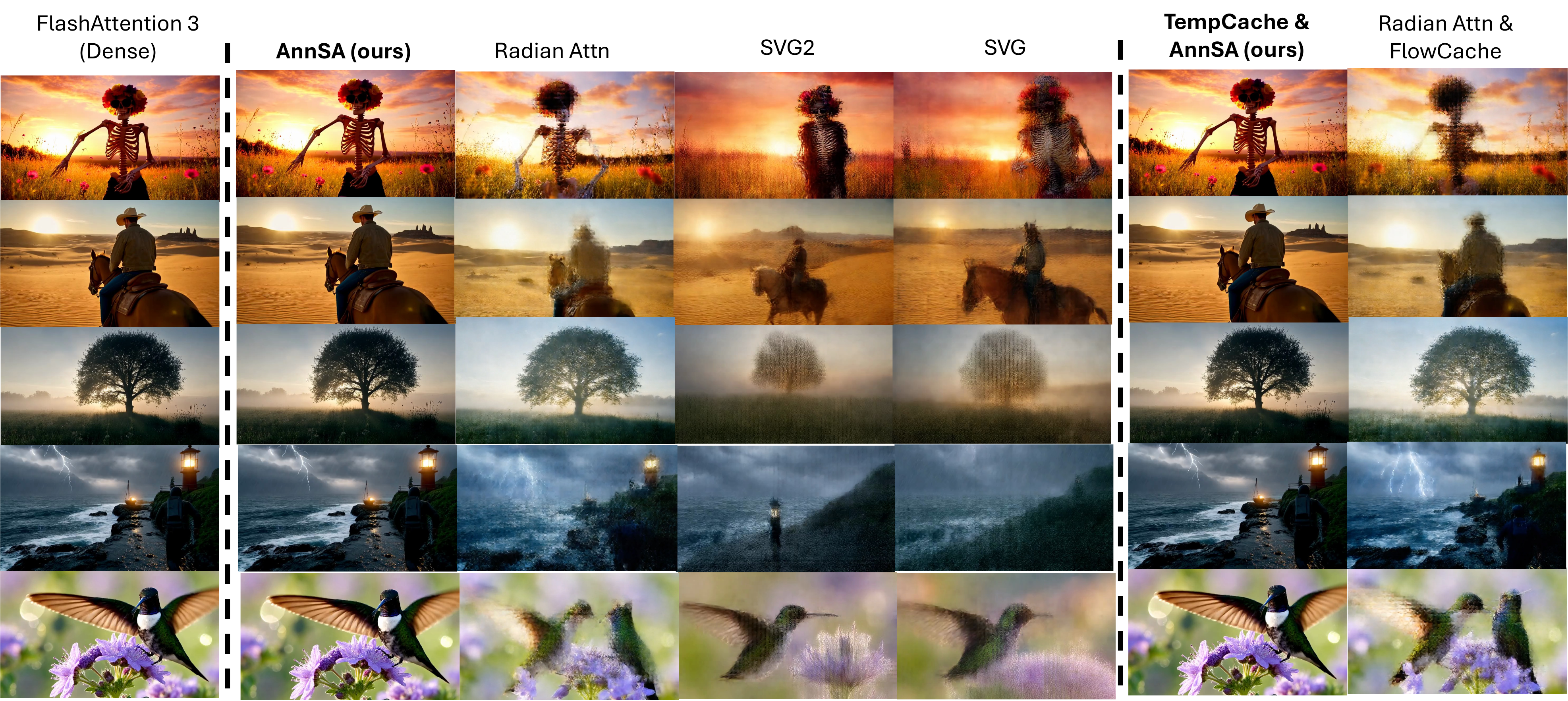}
	\caption{\textbf{Qualitative results on video world-model  LongVie2.} Our TempCache+ANN sparsification preserves the visual fidelity and temporal consistency of dense FlashAttention-3 across diverse prompts, while offline-designed sparsification baselines (SVG1/2) often exhibit artifacts and drift; RadialAttention is more stable but still degrades in challenging scenes. full videos and additional results can be found in the supplementary material. }
	\label{fig:longvie2_time_memory_scaling_supp}
\end{figure*}

Figure~\ref{fig:time_memory_scaling_supp} shows attention scaling on LongVie2 as generation length increases.
\textbf{(a) Attention time.} Dense FlashAttention-3 exhibits steadily increasing attention latency with more frames, and RadialAttention remains slower than our method; SVG1/2 are orders-of-magnitude slower (log-scale). In contrast, our approach keeps attention time constant across the full 3K-frame generation.
\textbf{(b) Peak memory.} Peak GPU memory for FA3 (and other baselines) grows with context length due to KV-cache expansion, whereas our method stays essentially flat throughout generation, indicating a constant-size KV cache. Together, these results confirm that our approach avoids both latency and memory growth for long-horizon world-model generation.

Table~\ref{tab:magi_skyreels} further reports results on MAGI-1 and SkyReels-V2, following \emph{exactly the same evaluation protocol} as~\cite{FlowCache}. Across both models, TempCache-Quant (ours) achieves the best efficiency--quality trade-off: it reduces compute (PFLOPs) and latency while maintaining or improving generation quality. On MAGI-1, TempCache-Quant lowers compute to 110 PFLOPs and improves speed to 4.11$\times$ with higher VBench (78.99\%) and strong perceptual fidelity (LPIPS 0.3156, SSIM 0.6555, PSNR 23.12). On SkyReels-V2, it further scales to 9.25$\times$ speedup with the lowest compute (11 PFLOPs) and latency (146s), while preserving near-baseline VBench (83.82\% vs.\ 83.84\%). These results demonstrate that TempCache provides substantial acceleration without compromising visual quality, outperforming prior caching baselines under a fair, identical setup.

Table~\ref{tab:hunyuan_wan_annsa} reports \textbf{5-second} generation results on HunyuanVideo and Wan2.1-14B.
Across both models, \textbf{AnnSA-LSH/Quant (ours)} achieves quality comparable to (and in several metrics slightly better than) existing sparse-attention baselines, while remaining in the same efficiency regime.
Notably, the best latency/speedup is still attained by \textbf{STA (FA3)} on these short clips, since sparse attention relies on kernels that incur additional overhead and are typically less optimized than FlashAttention at short sequence lengths.
These results indicate that our method is already competitive for short videos, while our main gains emerge in long-horizon generation where attention and KV-cache growth dominate runtime and memory (cf. long-context experiments).

\subsection{Qualitative Results}

Figure~\ref{fig:longvie2_time_memory_scaling_supp} shows additional qualitative comparisons on \textbf{LongVie2} rollouts. Our method remains visually close to dense FlashAttention-3, preserving scene layout, subject identity, and lighting over time, even in challenging cases with thin structures and low-contrast textures. In contrast, offline-designed sparsification baselines (SVG1/2) often accumulate artifacts during long-horizon generation, leading to blur, texture wash-out, and partial disappearance or distortion of the main subject. RadialAttention is generally more stable than SVG-style sparsity but still exhibits noticeable degradation in several examples. Overall, these LongVie2 results further support that our KV compression together with ANN-based SA/CA sparsity maintains temporal coherence and perceptual quality under long rollouts while enabling efficient generation.

\section{Ablation Study}

\begin{figure*}[t!]
	\centering
	\includegraphics[width=\linewidth]{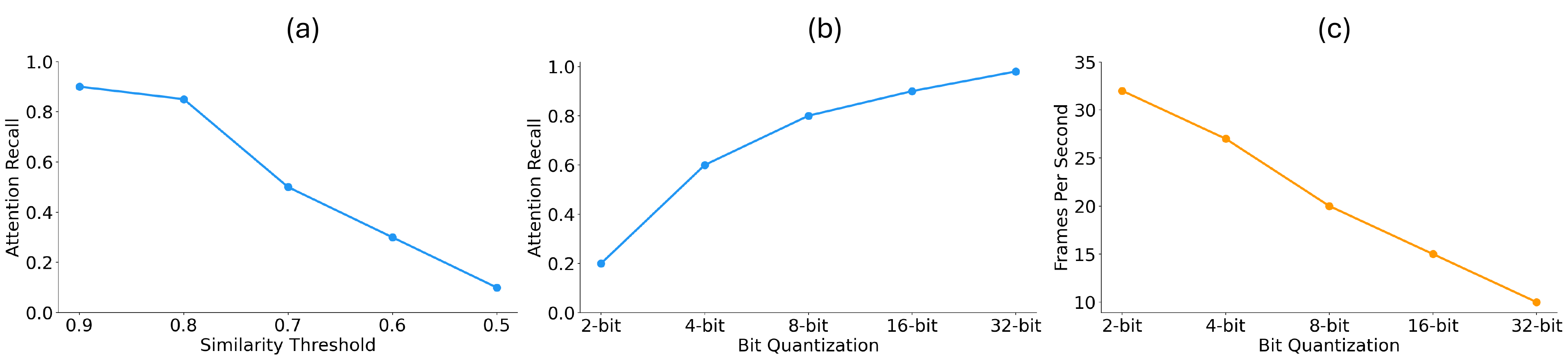}
	\caption{\textbf{Ablations on TempCache and quantized ANN.}
(a) TempCache KV compression trades accuracy for compression: lowering the key-similarity threshold increases merging but reduces attention recall.
(b) Quantization bit-width vs.\ recall: higher precision improves ANN matching quality.
(c) Quantization bit-width vs.\ throughput: lower precision yields higher FPS, highlighting the accuracy--speed trade-off.}

	\label{fig:ablation}
\end{figure*}

\paragraph{TempCache similarity threshold.}
Figure~\ref{fig:ablation}(a) analyzes the key design trade-offs in TempCache. TempCache merges temporally corresponding keys when their similarity exceeds a threshold. As the threshold decreases from $0.9$ to $0.5$, the compression becomes more aggressive, but attention recall drops from $0.90$ to $0.10$, indicating that over-merging can remove keys that still contribute meaningfully to attention. This motivates using conservative thresholds that capture near-duplicate temporal features while avoiding excessive approximation.

\paragraph{Quantization precision for ANN matching.}
Figure~\ref{fig:ablation} analyzes the key design trade-offs our quantization-based ANN matching.
We vary the bit-width used for ANN retrieval. Increasing precision improves matching quality, raising recall from $0.20$ (2-bit) to $0.98$ (32-bit), while also reducing throughput as computation and memory bandwidth increase (FPS decreases from $32$ to $10$ from 2-bit to 32-bit). Overall, we observe a clear accuracy--efficiency trade-off, where mid-precision settings (e.g., 8-bit) provide a strong balance, achieving high recall ($\approx 0.80$) with substantial speed gains.

\paragraph{Representative selection in KV merging.}
When grouping temporally corresponding keys, TempCache must choose a single representative per group. We compare (i) \emph{Last-key} (ours), which keeps the most recent key, (ii) \emph{Mean-key}, which averages keys in the group, and (iii) \emph{Medoid-key}, which selects the most central key. Last-key performs best, achieving 90\% attention recall versus 75\% for mean-key and 66\% for medoid-key. This aligns with autoregressive generation: current-frame queries are most compatible with recent context, whereas averaging or selecting a central past key can blur recency-specific features and distort the attention distribution.



\end{document}